\documentclass[10pt,a4paper]{article}
\usepackage[T1]{fontenc}
\usepackage{lmodern}
\usepackage[margin=24mm,top=21mm,bottom=22mm]{geometry}
\usepackage{amsmath,amssymb,amsthm,booktabs,microtype}
\usepackage[numbers,sort&compress]{natbib}
\usepackage[hidelinks]{hyperref}
\hypersetup{pdftitle={RoPE attention is an exact forward-pass gradient step with softmax intact},
 pdfauthor={Julie Huang, Maggie Chlon, Leon Chlon},
 pdfsubject={Exact effective-weight representation of RoPE--softmax attention}}
\newtheorem{theorem}{Theorem}
\newtheorem{proposition}[theorem]{Proposition}
\newtheorem{corollary}[theorem]{Corollary}
\newcommand{\R}{\mathbb{R}}
\newcommand{\T}{^{\!\top}}
\newcommand{\Iset}{\mathcal{I}}
\newcommand{\DM}{\Delta M}
\newcommand{\norm}[1]{\left\lVert #1\right\rVert}
\newcommand{\dd}{\mathrm{d}}
\begin{document}
\begin{center}
{\Large\bfseries RoPE attention is an exact forward-pass\\[2pt]
gradient step with softmax intact.\par}
\vspace{9pt}
{\large Julie Huang\textsuperscript{1}, Maggie Chlon\textsuperscript{1},
Leon Chlon\textsuperscript{1,2,*}\par}
\vspace{4pt}
\textsuperscript{1}Hassana Labs\quad
\textsuperscript{2}Department of Information Engineering, Oxford University\par
\textsuperscript{*}Correspondence:
\href{mailto:leochlon@robots.ox.ac.uk}{leochlon@robots.ox.ac.uk}
\end{center}
\vspace{2pt}
\begin{abstract}
\noindent We derive an exact gradient-step representation of the RoPE--softmax forward pass.
For every deterministic RoPE--softmax attention head with arbitrary affine projection
weights, we construct a query-dependent effective matrix $\DM_i$ satisfying
$y_i=\mu_i+u_i\T\DM_i$, where $\mu_i$ is the uniform mean of the attended values
and $u_i$ is the augmented query input. The construction applies the classical
exponential divided difference $\rho=\varphi_1$ to retain softmax exactly. Its positive
coefficients give a unit gradient-step representation on a query-conditioned quadratic
objective. The same function connects the RoPE generator to exact positional finite
differences. We derive a tokenwise formula for the error of reusing one query's matrix
and prove that a nonconstant finite-cache head cannot admit a globally exact affine
query readout. Reconstruction checks and frozen-reuse calibration on one pretrained
Qwen2.5-0.5B layer verify the representation and quantify the correction required
when one query's matrix is reused.
\end{abstract}

\section{Contribution and relation to existing constructions}
We apply the exponential divided difference to express full RoPE--softmax attention
in the original projection-weight coordinates, including biases. This yields an exact
query-conditioned matrix, its per-token decomposition, and the corresponding
quadratic gradient-step representation. A boundedness proposition characterises the
obstruction to a globally shared affine readout; a matrix-function identity connects
the same construction to positional derivatives and finite differences.

The construction uses the classical $\varphi_1$ function from exponential integrators
\cite{hochbruck2010} to factorise the softmax readout. Softmax attention is a
Nadaraya--Watson-type normalised kernel average
\cite{nadaraya1964,watson1964}: with $\kappa_{ij}=\exp(s_{ij})$, its output is
$\sum_j\kappa_{ij}v_j/\sum_j\kappa_{ij}$. RoPE enters $s_{ij}$ through the relative
rotation inside the exponential.

Von Oswald et al.\ \cite{vonoswald2023} construct a least-squares gradient step using
linear self-attention. Schlag et al.\ \cite{schlag2021} develop the fast-weight view of
linear attention, and Dai et al.\ \cite{dai2023} use a linearised attention expression
to obtain an implicit weight update. Here the coefficients retain the exponential and
its normalisation for arbitrary projection weights. Modern Hopfield networks
give an exact softmax-attention correspondence \cite{ramsauer2021}; the object derived
here is a query-conditioned effective matrix in input coordinates, together with its
score-shift family and exact reuse residual.

\section{Exact effective weights with softmax intact}
\subsection{RoPE, affine projections, and the attended bank}
Let $x_i\in\R^m$ be a column input, with query/key width $d$ and value width $r$.
Use row-vector projections
\begin{equation}
q_i=x_i\T W_q+b_q,\qquad
k_j=x_j\T W_k+b_k,\qquad
v_j=x_j\T W_v+b_v,
\label{eq:projections}
\end{equation}
where $W_q,W_k\in\R^{m\times d}$, $W_v\in\R^{m\times r}$, and biases are rows.
For standard RoPE with fixed frequencies \cite{su2021}, set $\tau=\sqrt d$ and
\begin{equation}
R(p)=e^{pA},\qquad
A=\operatorname{diag}_{\ell=1}^{d/2}
\begin{pmatrix}0&-\omega_\ell\\ \omega_\ell&0\end{pmatrix},\qquad
R(p_i)\T R(p_j)=R(p_j-p_i).
\label{eq:rotation}
\end{equation}
The rotated rows are $\widetilde q_i=q_iR(p_i)\T$ and
$\widetilde k_j=k_jR(p_j)\T$. Augment the query input and projection by
\begin{equation}
u_i=\begin{pmatrix}x_i\\1\end{pmatrix},\qquad
\widehat W_q=\begin{pmatrix}W_q\\b_q\end{pmatrix},\qquad
L_i=\frac{\widehat W_qR(p_i)\T}{\tau},\qquad
a_{ij}=L_i\widetilde k_j\T.
\label{eq:features}
\end{equation}
Then the raw score is exactly $s_{ij}=\widetilde q_i\widetilde k_j\T/\tau=u_i\T a_{ij}$.
In the bias-free case this is
$s_{ij}=x_i\T W_qR(p_j-p_i)W_k\T x_j/\tau$.
Let $\Iset_i$ be a finite nonempty set of permitted keys; causal masking selects this
set. For deterministic attention, define
\begin{equation}
N_i=|\Iset_i|,\qquad Z_i=\sum_{j\in\Iset_i}e^{s_{ij}},\qquad
\mu_i=\frac1{N_i}\sum_{j\in\Iset_i}v_j,\qquad
y_i=\frac1{Z_i}\sum_{j\in\Iset_i}e^{s_{ij}}v_j.
\label{eq:attention}
\end{equation}
Here $\mu_i$ is the uniform arithmetic mean of the permitted values.

\subsection{The exponential divided difference}
The entire function
\begin{equation}
\rho(s)=\varphi_1(s)=\exp[0,s]
=\int_0^1e^{ts}\,\dd t
=\sum_{n=0}^{\infty}\frac{s^n}{(n+1)!}
=\begin{cases}(e^s-1)/s,&s\ne0,\\1,&s=0\end{cases}
\label{eq:rho}
\end{equation}
is the first divided difference of the exponential at $0$ and $s$, with its continuous
value at coincident arguments. It is the standard $\varphi_1$ of exponential
integration \cite{hochbruck2010}; $\rho(s)>0$ for real $s$ and $e^s-1=s\rho(s)$.

\begin{theorem}[Exact effective-weight representation]
For the head in \eqref{eq:projections}--\eqref{eq:attention}, define
\begin{equation}
\boxed{\quad c_{ij}=\frac{\rho(s_{ij})}{Z_i},\qquad
\DM_i=\sum_{j\in\Iset_i}c_{ij}a_{ij}(v_j-\mu_i),\qquad
y_i=\mu_i+u_i\T\DM_i.\quad}
\label{eq:identity}
\end{equation}
The last equality holds exactly for every query and all finite projection weights
and inputs.
\end{theorem}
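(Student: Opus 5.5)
The plan is to verify the identity by direct algebra, using the linearity of the score in the augmented query, $s_{ij}=u_i\T a_{ij}$, together with the factorisation $e^{s}-1=s\rho(s)$ recorded after \eqref{eq:rho}.

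First I rewrite $y_i-\mu_i$ as a centred softmax average. Since the weights $e^{s_{ij}}/Z_i$ sum to one over $\Iset_i$,
\[
y_i-\mu_i=\frac1{Z_i}\sum_{j\in\Iset_i}e^{s_{ij}}(v_j-\mu_i).
\]
Next I subtract the constant part of the exponential. By the definition of $\mu_i$ we have $\sum_{j\in\Iset_i}(v_j-\mu_i)=0$, so
\[
\sum_j e^{s_{ij}}(v_j-\mu_i)=\sum_j\bigl(e^{s_{ij}}-1\bigr)(v_j-\mu_i).
\]
This subtraction is the one step I regard as essential rather than routine. It is where the uniform mean $\mu_i$, and not some other baseline, is forced, because centring at $\mu_i$ is exactly what kills the constant term $1$ in the expansion of the exponential.

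I then apply $e^{s}-1=s\rho(s)$ with $s=s_{ij}$, which holds for all real $s$, including $s=0$ by the continuous value $\rho(0)=1$. This gives
\[
y_i-\mu_i=\sum_j\frac{\rho(s_{ij})}{Z_i}\,s_{ij}\,(v_j-\mu_i)=\sum_j c_{ij}\,s_{ij}\,(v_j-\mu_i).
\]
Now I substitute $s_{ij}=u_i\T a_{ij}$, which is a scalar, from \eqref{eq:features}. Here $a_{ij}\in\R^{m+1}$ is a column and $v_j-\mu_i$ is a row, so $s_{ij}(v_j-\mu_i)=u_i\T a_{ij}(v_j-\mu_i)$. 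Pulling $u_i\T$ out of the finite sum leaves $u_i\T\DM_i$, which is the claim.

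To complete the argument I also need the score identity itself. I check it by writing $\widetilde q_i=u_i\T\widehat W_qR(p_i)\T$, which uses $q_i=u_i\T\widehat W_q$. Then $\widetilde q_i\widetilde k_j\T/\tau=u_i\T L_i\widetilde k_j\T$.

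Exactness for all finite weights follows because every step is an algebraic identity: $Z_i>0$ and the index set is finite. No case split is needed at $s_{ij}=0$, since $\rho$ is entire. The only genuine obstacle is recognising the centring step above; the rest is bookkeeping of row and column shapes.
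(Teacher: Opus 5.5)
Your proof is correct and matches the paper's proof step for step: center at the uniform mean so that $e^{s_{ij}}$ can be replaced by $e^{s_{ij}}-1$, factor this as $s_{ij}\rho(s_{ij})$, and substitute $s_{ij}=u_i\T a_{ij}$. Your explicit check of the score identity via $q_i=u_i\T\widehat W_q$ is the only addition, and the paper states that identity just before the theorem rather than inside its proof.
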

\begin{proof}
The centred values sum to zero. Therefore
\begin{align*}
y_i-\mu_i
&=Z_i^{-1}\sum_j e^{s_{ij}}(v_j-\mu_i)
 =Z_i^{-1}\sum_j(e^{s_{ij}}-1)(v_j-\mu_i)\\
&=Z_i^{-1}\sum_j s_{ij}\rho(s_{ij})(v_j-\mu_i)
 =u_i\T\sum_j c_{ij}a_{ij}(v_j-\mu_i).
\end{align*}
\end{proof}
In particular, without projection biases the effective matrix in the original weights is
\begin{equation}
\DM_i=\frac1{\tau Z_i}\sum_{j\in\Iset_i}\rho(s_{ij})
\bigl[W_qR(p_j-p_i)W_k\T x_j\bigr]\bigl[x_j\T W_v-\mu_i\bigr].
\label{eq:original}
\end{equation}
At a fixed bank and query position, write $L=L_i$ and $a_j=a_{ij}$. Then
$\DM_i=LT_i$, where
$T_i=\sum_j c_{ij}\widetilde k_j\T(v_j-\mu)\in\R^{d\times r}$.
The cache supplies the outer-product directions; the query selects their positive
coefficients. This gives $\operatorname{rank}(\DM_i)\le d$ in this representation.

\begin{corollary}[A query-conditioned unit gradient step]
For each fixed query, define $B\in\R^{(m+1)\times r}$ and
\begin{equation}
\mathcal E_i(B)=\frac12\sum_{j\in\Iset_i}c_{ij}
\norm{a_{ij}\T B-(v_j-\mu_i)}_2^2.
\quad\text{Then}\quad
0-\nabla_B\mathcal E_i(0)=\DM_i.
\label{eq:gradient}
\end{equation}
\end{corollary}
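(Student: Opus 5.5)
The plan is a direct differentiation of the weighted least-squares objective in \eqref{eq:gradient}, evaluated at $B=0$. The claimed equality then reduces to the definition of $\DM_i$ in \eqref{eq:identity}. Throughout, the query index $i$ is fixed. The weights $c_{ij}$, the feature columns $a_{ij}\in\R^{m+1}$ and the centred rows $v_j-\mu_i\in\R^{1\times r}$ are therefore constants that do not depend on $B$. This matters: the coefficients $c_{ij}=\rho(s_{ij})/Z_i$ depend on the query, but not on the variable $B$, so no chain-rule terms through $c_{ij}$ arise.

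The steps are as follows.

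\begin{enumerate}
\item \textbf{Gradient of a single summand.} For fixed $a\in\R^{m+1}$ and target row $t\in\R^{1\times r}$, consider $f(B)=\tfrac12\norm{a\T B-t}_2^2$. Expanding $f(B+H)$ to first order in $H$ gives
\[
\dd f=(a\T B-t)(a\T H)\T=\operatorname{tr}\bigl(H\T a(a\T B-t)\bigr),
\]
so $\nabla_Bf(B)=a(a\T B-t)$, an $(m+1)\times r$ outer product.

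\item \textbf{Sum over the bank.} By linearity over the finite index set $\Iset_i$,
\[
\nabla_B\mathcal E_i(B)=\sum_{j\in\Iset_i}c_{ij}\,a_{ij}\bigl(a_{ij}\T B-(v_j-\mu_i)\bigr).
\]

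\item \textbf{Evaluate at $B=0$.} Setting $B=0$ gives
\[
-\nabla_B\mathcal E_i(0)=\sum_{j}c_{ij}a_{ij}(v_j-\mu_i)=\DM_i,
\]
which is exactly the definition of $\DM_i$ in \eqref{eq:identity}.

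\item \textbf{Interpretation as a unit gradient step.} The left side of \eqref{eq:gradient} is one gradient step of size $1$ from $B=0$. Combining this with the Theorem, applying the stepped matrix to $u_i$ reproduces $y_i-\mu_i$ exactly.

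\item \textbf{Role of positivity.} Since $\rho>0$ on the reals and $Z_i>0$, every $c_{ij}>0$. Hence $\mathcal E_i$ is a genuine nonnegatively weighted quadratic, that is, a convex least-squares objective. Positivity is not needed for the gradient identity itself, but it justifies calling $\mathcal E_i$ an objective.
\end{enumerate}

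The only point that needs care is the convention fixing the shape of the gradient. The gradient must be taken in the $(m+1)\times r$ layout matching $B$, with the row/column conventions of \eqref{eq:projections} respected, so that the outer product appears as $a_{ij}(\cdot)$ and not as its transpose. Apart from that bookkeeping, the argument is immediate.
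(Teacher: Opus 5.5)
Your proposal is correct and matches the paper's proof: with the coefficients, features and centred values held fixed in $B$, you compute $\nabla_B\mathcal E_i(B)=\sum_j c_{ij}a_{ij}(a_{ij}\T B-(v_j-\mu_i))$ and evaluate at $B=0$ to recover $\DM_i$. Your trace derivation of the single-summand gradient and your remark that $c_{ij}>0$ spell out details the paper leaves implicit.
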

\begin{proof}
With the query, features, centred values, and coefficients held fixed,
$\nabla_B\mathcal E_i(B)=\sum_j c_{ij}a_{ij}(a_{ij}\T B-(v_j-\mu_i))$;
evaluation at $B=0$ gives the result.
\end{proof}
A unit step from zero on $\mathcal E_i$ gives the effective matrix and its exact
readout $y_i=\mu_i+u_i\T\DM_i$. Every coefficient is computed directly from
the query and its attended bank through \eqref{eq:identity}.
As all scores approach zero, $c_{ij}\to1/N_i$; the first-order readout is
$\mu_i+N_i^{-1}\sum_j s_{ij}(v_j-\mu_i)$. This is the centred outer-product form
underlying linear-attention gradient constructions. The corresponding regression
step is specified by the feature/label encoding, projection choices, and step scale
of von Oswald et al.\ \cite{vonoswald2023}.

\section{The same function links RoPE derivatives and finite differences}
For fixed content $z_0$, let $z(p)=R(p)z_0$ be a rotated column feature. The generator
in \eqref{eq:rotation} gives
\begin{equation}
\partial_p^n z(p)=A^n z(p),\qquad n\ge1.
\label{eq:derivatives}
\end{equation}
The matrix extension of \eqref{eq:rho} is defined by the convergent power series
$\rho(B)=\sum_{n\ge0}B^n/(n+1)!$. Consequently
\begin{equation}
\boxed{\quad e^{\delta A}-I=\delta A\rho(\delta A),\qquad
z(p+\delta)-z(p)=\delta A\rho(\delta A)z(p).\quad}
\label{eq:matrixrho}
\end{equation}
The power-series definition also covers zero-frequency blocks. These identities expose
RoPE as a bank of position-response operators on frozen features: $A$ gives the
infinitesimal response and $A\rho(\delta A)$ gives the exact finite-difference quotient.
The same entire function converts the scalar softmax increment $e^s-1$ into
$s\rho(s)$ and the matrix rotary increment $e^{\delta A}-I$ into
$\delta A\rho(\delta A)$. In \eqref{eq:original}, rotary position structure enters
through $R(p_j-p_i)$ and its scores, while $\rho(s_{ij})$ retains their full exponential
response. The derivatives in \eqref{eq:derivatives} are with respect to position at fixed content.

\section{Residual coordinates, gauge, and exact reuse error}
\subsection{Residual write and score-shift family}
In the bias-free case, if the head input is also the residual input and
$W_o\in\R^{r\times m}$, then
\begin{equation}
x_i\T+y_iW_o=x_i\T(I_m+\DM_iW_o)+\mu_iW_o.
\label{eq:residual}
\end{equation}
For a pre-normalised head, use $x_i=\operatorname{Norm}(h_i)$ and the exact residual
$h_i\T+(\mu_i+u_i\T\DM_i)W_o+b_o$; projected head writes are summed for multi-head
attention. Equation~\eqref{eq:identity} is a secant representation of the readout.
At a fixed bank and position, its differential is
$\dd y_i=\dd u_i\T\DM_i+u_i\T\dd\DM_i$; the second term contains the
query derivatives of the coefficients.

Effective matrices depend on a score-shift convention. For any fixed
$g\in\R^{m+1}$, put $\sigma_i=u_i\T g$. Softmax is unchanged by subtracting
$\sigma_i$ from every attended score. Applying Theorem~1 to the shifted features gives
\begin{equation}
\begin{aligned}
a_{ij}^g&=a_{ij}-g,&
c_{ij}^g&=\frac{e^{\sigma_i}\rho(s_{ij}-\sigma_i)}{Z_i},\\
\DM_i^g&=\sum_j c_{ij}^g a_{ij}^g(v_j-\mu_i),&
y_i&=\mu_i+u_i\T\DM_i^g.
\end{aligned}
\label{eq:gauge}
\end{equation}
Key shifts give the subfamily $g=L_i b$, of dimension at most $d$ at a fixed position.
We call \eqref{eq:identity} the \emph{original-score gauge}; all matrix-transfer results
below use this gauge. The displayed transformations give a family of exact matrices
for the same output map.

\subsection{Global affine representability}
\begin{proposition}[Boundedness obstruction to a global affine readout]
Fix a finite nonempty key/value bank and query position, and let
$F:\R^m\to\R^r$ be its softmax head output as the query coordinate ranges over
all of $\R^m$. If $F$ is nonconstant, no fixed $b$ and $M$ satisfy
$F(x)=b+x\T M$ for every $x\in\R^m$.
\label{prop:bounded}
\end{proposition}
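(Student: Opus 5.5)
The plan is to argue by boundedness. A softmax head output over a finite bank is a convex combination of finitely many fixed value rows, so $F$ is bounded. A nonconstant affine map on all of $\R^m$ is unbounded. The two facts are incompatible.

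First, with the bank and position fixed, the keys $\widetilde k_j$, values $v_j$ and the rotation $R(p_i)$ are all constants. Only the query input $x$ varies, and the scores $s_j(x)=u\T a_j$ are affine in $x$. By \eqref{eq:attention}, $F(x)=\sum_{j\in\Iset}w_j(x)v_j$ with weights $w_j(x)=e^{s_j(x)}/Z(x)>0$ that sum to one. Hence $F(x)$ lies in the convex hull of the finite set $\{v_j\}_{j\in\Iset}$. This gives the uniform bound $\norm{F(x)}_2\le\max_j\norm{v_j}_2$ for every $x\in\R^m$.

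Next, suppose for contradiction that $F(x)=b+x\T M$ for all $x$. If $M=0$, then $F\equiv b$ is constant, which contradicts the hypothesis. So $M\ne0$, and there is a column input $z$ with $z\T M\ne0$; for instance, take $z$ to be a row direction on which some column of $M$ is nonzero. Along the ray $x=tz$ we get $\norm{F(tz)}_2\ge|t|\,\norm{z\T M}_2-\norm{b}_2$, which tends to infinity as $|t|\to\infty$. This contradicts the uniform bound, and the proposition follows.

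No step is a real obstacle. The only point needing care is that the bank really is held fixed: the proposition varies the query coordinate only, so the values $v_j$, and hence the convex-hull bound, do not depend on $x$. Under that reading, affinity of the scores in $x$ is not even needed. It is also worth noting that Theorem~1 does not conflict with this result. In $y_i=\mu_i+u_i\T\DM_i$ the matrix $\DM_i$ depends on the query through $c_{ij}$ and $Z_i$, so the representation is exact pointwise but is not a single global affine readout.
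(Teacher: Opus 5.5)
Your proposal is correct and follows essentially the same argument as the paper. Both use the convex-combination bound $\norm{F(x)}_2\le\max_j\norm{v_j}_2$, then show that $M\ne0$ forces unboundedness along a ray $tz$ with $z\T M\ne0$, so $M=0$ and $F$ would be constant.
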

\begin{proof}
Every output is a convex combination of the fixed values, so
$\norm{F(x)}_2\le\max_j\norm{v_j}_2$ for all $x$.
If $M\ne0$, choose $z$ with $z\T M\ne0$; then $\norm{b+t z\T M}_2\to\infty$
as $|t|\to\infty$. Hence $M=0$, which makes $F$ constant, a contradiction.
\end{proof}
Matrix transfer on a finite query set is quantified by the following exact residual.

\subsection{The exact term omitted by freezing}
Fix the bank and query position so that $a_j$ and $\mu$ are shared, and select an
anchor query $a$. Define $y_i^{\mathrm{fr}}=\mu+u_i\T\DM_a$. Then
\begin{equation}
\boxed{\quad
e_i=y_i-y_i^{\mathrm{fr}}
=u_i\T(\DM_i-\DM_a)
=\sum_{j\in\Iset}(c_{ij}-c_{aj})(u_i\T a_j)(v_j-\mu).
\quad}
\label{eq:tokenerror}
\end{equation}
Thus $y_i=\mu+u_i\T\DM_a+e_i$, and $e_iW_o$ is the exact projected correction.
Each summand resolves this correction by cached token. In bias-free coordinates,
freezing anchor $0$ omits exactly $x_i\T(\DM_i-\DM_0)$, with
\begin{equation}
\norm{e_i}_2\le\norm{u_i}_2\norm{\DM_i-\DM_a}_{\mathrm{op}}.
\label{eq:bound}
\end{equation}
The fixed-position condition makes the features shared; changing the position also
changes $L_i$, and changing the bank can change both features and mean.

\section{Pretrained reconstruction and frozen-reuse calibration}
\subsection{Protocol and reconstruction precision}
We evaluate Qwen2.5-0.5B \cite{qwen2024}, layer 23 (zero-indexed), on $C=16$ article
prefixes of 511 tokens, all 14 query heads, and $Q=32$ alternative one-token
continuations at position 511. Continuation $0$ is the observed next token; the other
31 are the model's highest-ranked distinct nonspecial alternatives. Each continuation
reads the same prefix bank. Projection biases and the model's pre-normalised head
inputs are included. Reconstruction is checked both for the prefix-only readout and
for the full causal readout including the continuation's own key/value.

For the latter, if the self token receives mass $\pi_i$, the exact relation is
$y_i^{\mathrm{full}}=(1-\pi_i)y_i^{\mathrm{prefix}}+\pi_i v_i^{\mathrm{self}}$.
The full-bank version of \eqref{eq:identity} uses its own scores and uniform mean.
The 240 reconstruction rows comprise 224 individual head/context pairs and 16
combined projected writes. All pass the reported checks. Combined prefix and full
reconstruction MSEs in float64 are approximately $9.8\times10^{-30}$ and
$9.6\times10^{-30}$. Against the native FP32 combined write, the MSE is
$7.01\times10^{-14}$ and the maximum absolute difference is $1.4\times10^{-5}$.
The float64 figures measure agreement between algebraically equivalent evaluations;
the native comparison measures agreement at the model's execution precision.

\subsection{Frozen reuse as calibration of the omitted term}
The calibration evaluates the residual in \eqref{eq:tokenerror} on alternative
queries from one position. Let $o_{ci}=\sum_h y_{cih}W_{oh}\in\R^D$ be the exact
combined projected prefix write and let $f_{ci}^{(a)}$ denote a predictor formed
using anchor $a$. Define
\begin{equation}
E_a(f)=\frac1C\sum_{c=1}^C\frac1{(Q-1)D}
\sum_{i\ne a}\norm{f_{ci}^{(a)}-o_{ci}}_2^2,
\qquad
\mathrm{rRMSE}_a(f)=\sqrt{\frac{E_a(f)}{E_a(\mathrm{constant})}},
\label{eq:metrics}
\end{equation}
where $f_{ci}^{(a),\mathrm{constant}}=o_{ca}$. All-anchor evaluation uses
$\overline E(f)=Q^{-1}\sum_{a=0}^{Q-1}E_a(f)$ and
$\overline{\mathrm{rRMSE}}(f)=\sqrt{\overline E(f)/\overline E(\mathrm{constant})}$.
Squared errors are pooled with equal context weights, with ratios taken after pooling.
The combined write sums projected heads before its error is measured.

In addition to constant-anchor and frozen-matrix predictors, the per-head controls are
\begin{equation}
y_i^{\mu}=\mu,\qquad
y_i^{\mathrm{lin}}=\mu+\frac1N\sum_j s_{ij}(v_j-\mu).
\label{eq:controls}
\end{equation}
The second is the first-order softmax expansion at zero logits.
Every control undergoes the same output projection and
aggregation as its corresponding exact head.

\clearpage
\begin{table}[ht]
\centering
\caption{Frozen-reuse calibration: pooled MSE of the combined projected prefix write.
Anchor 0 is the observed next token. All anchors averages the 32 possible anchors
within each of the 16 contexts. Matrices use the original-score gauge.}
\label{tab:calibration}
\begin{tabular}{lrr}
\toprule
Predictor & Anchor 0 & All anchors\\
\midrule
Constant anchor output & 0.047787 & 0.045672\\
Cache mean ($\DM=0$) & 0.124573 & 0.124513\\
Frozen anchor matrix & 0.324970 & 0.356763\\
Linearised softmax & 0.161842 & 0.162068\\
\bottomrule
\end{tabular}
\end{table}

The target RMS is $0.670545$ for anchor 0 and $0.670824$ across all anchors.
Frozen rRMSE against the constant-anchor output is $2.608$ for anchor 0 and $2.795$
across all anchors. The anchor-0 RMSE ratio against the cache mean is $1.615$.
At the individual-head level, the all-anchor rRMSE is $0.728$--$0.933$ for 13 of
the 14 heads, and $8.516$ for head 5. Each per-head ratio uses that head's
constant-anchor output as its baseline.

The frozen-matrix row measures the MSE of the omitted correction in
\eqref{eq:tokenerror}. Adding that tokenwise correction to the reused matrix's
readout recovers the exact attention output for each query.

\paragraph{Code and archive.}
GitHub repository: \url{https://github.com/leochlon/secant/tree/main}.\\
Zenodo record: \url{https://zenodo.org/records/22543637}.

\bibliographystyle{unsrtnat}
{\small\bibliography{references}}
\end{document}